\documentclass[11pt]{article}
\usepackage{times}

\usepackage{amsmath,amsfonts,bm}

\def\eqref#1{equation~\ref{#1}}

\def\1{\bm{1}}

\DeclareMathAlphabet{\mathsfit}{\encodingdefault}{\sfdefault}{m}{sl}
\SetMathAlphabet{\mathsfit}{bold}{\encodingdefault}{\sfdefault}{bx}{n}

\usepackage{amsmath,amssymb,amsthm,booktabs,graphicx,float,placeins,hyperref,url,array,wrapfig,xcolor}
\usepackage[authoryear,round]{natbib}
\hypersetup{hidelinks}
\newcommand{\vTwoChange}[1]{#1}
\newtheorem{theorem}{Theorem}
\newtheorem{proposition}[theorem]{Proposition}
\title{The Information Geometry of Representation Formation}
\author{Sanjukta Krishnagopal \\
Department of Computer Science \\
University of California, Santa Barbara \\
\texttt{sanjukta@ucsb.edu}}
\begin{document}
\maketitle

\begin{abstract}
Most analyses of neural representations focus on trained endpoints, leaving the process by which internal structure forms comparatively unmeasured. We instead treat representation formation itself as an object of study, asking whether the path a representation takes through training reveals information that its endpoint does not. We formalize this idea by viewing representation learning as a geometric trajectory, using Bures geometry to measure movement through representation space and quantum Fisher information to characterize local sensitivity to input factors. We find that learning concentrates local sensitivity on task-relevant factors, and that changing only the prediction target reorganizes this geometry around the corresponding visual factor. More strikingly, representation trajectories become systematically less direct as the learned computation requires higher-order interactions among input factors. This effect persists under matched-performance and training-duration controls and \vTwoChange{transfers from synthetic vector inputs to residual CNNs on rendered and natural images}. Importantly, formation history contains predictive information about the stability of the learned representation: controlling for interaction order, path tortuosity predicts its subsequent drift under continued training and sensitivity to parameter perturbations, whereas endpoint displacement does not. Finally, representation-aware optimization lets us actively shape these trajectories, showing that formation geometry is not only measurable but actionable. Together, these results establish representation formation as a new source of information about learning: its geometry reveals properties of the computation being constructed, predicts subsequent representational stability, and provides a target for intervention during optimization.
\end{abstract}
\section{Introduction}

Understanding how neural networks form useful internal representations is central to explaining how they solve complex tasks. Existing representation analyses largely characterize networks at particular checkpoints: they compare trained representations, identify which variables they encode, or measure properties such as rank, intrinsic dimension, and class structure \citep{raghu2017,kornblith2019,anshuini2019,papyan2020}. Analyses of training dynamics, meanwhile, typically focus on loss, parameters, or predictions. These approaches provide powerful descriptions of what a network has learned, but a less direct view of \textbf{how its internal representation was constructed during training}.

\vTwoChange{\textbf{Formation history can matter even when endpoints appear equivalent.} Two networks may achieve similar predictive performance, or even similar final representations, while reaching those solutions through very different sequences of internal states. One may gradually refine useful features; another may repeatedly reorganize, discard, and reconstruct them. Measuring this process can reveal when task-relevant structure emerges, how much intermediate reorganization is involved in constructing a computation, and whether models that appear equivalent at their endpoints arrived there through different routes. This complements analyses of what a trained representation contains with an account of how that representation formed.}

We develop an information-geometric framework for this process. At each checkpoint, the trace-normalized covariance of hidden activations defines a state \(\rho_t\), and learning traces a trajectory \(\rho_0,\ldots,\rho_T\). Bures distance measures finite movement along this path, while its local metric, QFI, identifies the controlled input variations that most strongly change the state. The same ruler connects \textbf{what a representation becomes sensitive to} with \textbf{the route by which that structure forms}.

\begin{figure}[t]
\centering\includegraphics[width=.98\linewidth]{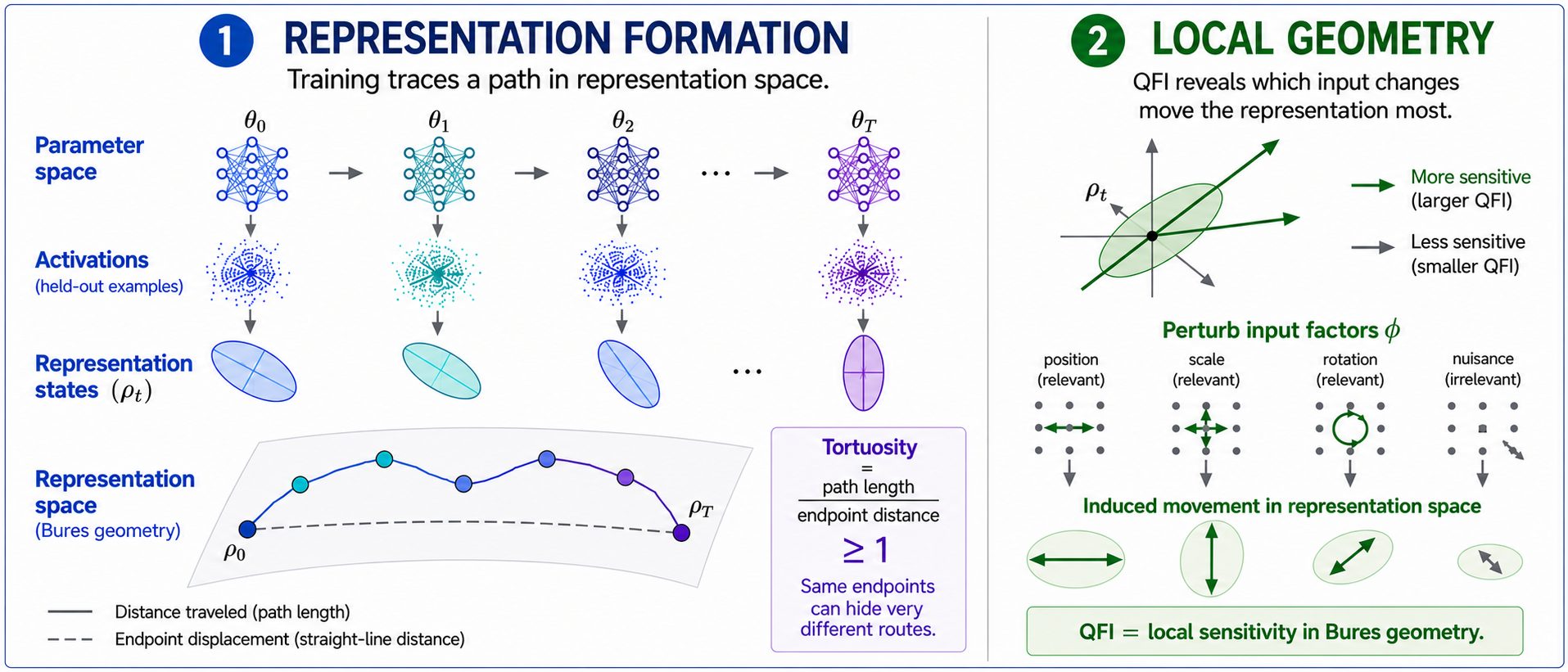}
\caption{\textbf{Local and path geometry of representation formation.} Left: training produces a trajectory of covariance states in Bures representation space, whose directness is summarized by tortuosity. Right: QFI measures which controlled input-factor perturbations induce the largest local representation movement.}
\label{fig:framework}
\end{figure}

\vTwoChange{We summarize the route using \textbf{tortuosity}, defined as total distance traveled divided by initial-to-final displacement.} \vTwoChange{We use tortuosity to expose a hidden aspect of learning. We find that more complex computations take less direct routes: tortuosity increases with the order of factor interactions, a pattern that persists across various controls and extends from synthetic inputs to residual CNNs on rendered and natural images.} Locally, QFI concentrates on task-relevant factors. More importantly, tortuosity carries information beyond the task itself: after controlling for interaction order, it predicts how much a learned representation will subsequently drift under continued training and small parameter perturbations, whereas \vTwoChange{standard} endpoint displacement predicts neither. \vTwoChange{The route therefore carries information about both the structure of the computation being learned and the stability of the resulting representation.}

Finally, pulling the metric back to parameter space yields a representation natural gradient (RepNG), which penalizes parameter directions according to how strongly they move the representation. This provides a mechanism test, not a claim that shorter paths are universally preferable. At matched performance, RepNG reduces tortuosity by \(29\%\) relative to Adam.

Our contributions are fourfold:

\begin{itemize}
\item \vTwoChange{\textbf{A geometric signature of computational structure.} We find that computations requiring higher-order interactions take less direct routes through representation space. The relationship persists across matched-output, performance, training-duration, and measurement controls, and extends from synthetic inputs to residual CNNs on rendered and natural images.}
\item \textbf{Path geometry predicts representational stability.} Controlling for interaction order, trajectory tortuosity predicts subsequent representation drift under continued training and parameter perturbations, while endpoint displacement does not.
\item \textbf{A validated local-to-global geometry of representation formation.} We connect local QFI sensitivity with finite Bures trajectories, providing a common language for what a representation becomes sensitive to and how that structure moves during learning.
\item \textbf{A diagnosis-to-intervention test.} We pull the representation geometry back to parameter space and show that representation-aware optimization systematically changes the representation trajectory.
\end{itemize}

Together, these results establish the route by which internal structure forms as an informative, predictive, and controllable object of study.

\section{Related Work}

\textbf{Representation comparison and trajectories.} RSA, CCA, SVCCA, and CKA characterize similarities between neural representations across networks, layers, and training stages \citep{kriegeskorte2008,raghu2017,morcos2018,kornblith2019}, with subsequent work clarifying their invariances, metric structure, sampling uncertainty, and statistical properties \citep{williams2021,ding2021,duong2023}. \citet{lange2022} further view the layerwise sequence of representations in a network as a geometric path and study its geometric structure. \textbf{Building on this perspective, we study the trajectory of a fixed representation across training and ask what properties of the computation being learned shape that trajectory.} In particular, we relate interaction order to path tortuosity and test whether the relationship survives matched-performance and architecture-transfer controls.

\textbf{Local information geometry.} Fisher information characterizes sensitivity to nearby perturbations and has a long history in statistics and neural coding \citep{amari1985,amari2016,brunel1998,ganguli2012}. Recent work has used information geometry to characterize stimulus sensitivity in retinal representations \citep{ding2023retina} and local distortions of fixed image representations \citep{lipshutz2024}. \textbf{We use local information geometry dynamically, to ask how sensitivity itself is organized by learning.} Tracking QFI through training reveals whether local sensitivity concentrates on task-relevant factors. Because QFI is the infinitesimal Bures metric, these local changes can be studied in the same geometry as the full training trajectory, linking \emph{what structure emerges} to \emph{how it is assembled}.

{\color{black}\textbf{Representation dynamics and geometric optimization.} Internal representations can evolve substantially during learning, with prior work characterizing feature-learning regimes and changes in rank, dimension, and class geometry \citep{jacot2018,chizat2019,woodworth2020,yang2021,anshuini2019,li2018,papyan2020,han2022,huh2023}. Grokking provides a striking example of such reorganization accompanying changes in generalization \citep{power2022,liu2022,nanda2023,kumar2024}, while work on transfer and continual learning highlights the importance of balancing representation stability and plasticity \citep{lihoiem2016,kumar2022,gomezvilla2022}.} \vTwoChange{Information-geometric approaches have also been used to characterize training trajectories in other spaces: \citet{mao2024training} find that diverse networks traverse an effectively low-dimensional manifold in prediction space, while \citet{thiruthummal2024information} study information-geometric trajectories of parameter distributions during training. Our focus is complementary: we place the geometry directly on the evolving internal representation and ask how its trajectory reflects the computation being constructed and the subsequent stability of the learned representation. Natural-gradient methods introduce geometry into optimization through metrics on model behavior, with scalable approximations exploiting layerwise structure and activation statistics \citep{amari1998,pascanu2014,martens2015,ollivier2015,martens2020}. Our framework differs by pulling back the Bures/QFI geometry of an explicitly chosen evolving internal representation, making the same representation geometry both an object of analysis and a target for intervention.}

Together, this work motivates our central question: how does the computation being learned shape the geometry of representation formation, and can that geometry itself be controlled?

\section{Representation Information Geometry}

We describe representation formation using three connected geometric objects: a state summarizing the representation at one checkpoint, a local geometry measuring its sensitivity to controlled input changes, and a global trajectory measuring its movement over training. Because the state depends on network parameters, the same geometry can then be pulled back to parameter space to intervene on that trajectory.

\subsection{Representation States}

To describe how a representation changes during training, we first define its state at a single checkpoint. Let \(H\in\mathbb{R}^{n\times d}\) contain the activations of a hidden layer on \(n\) fixed examples, and let \(H_c=H-\mathbf{1}\bar h^\top\) denote the centered activations. We define the \textbf{normalized covariance state}

\begin{equation*}
\rho =
\frac{H_c^\top H_c}
{\operatorname{Tr}(H_c^\top H_c)}.
\tag{1}
\end{equation*}

When \(\operatorname{Tr}(H_c^\top H_c)>0\), \(\rho\) is positive semidefinite (PSD) with unit trace. The covariance \(H_c^\top H_c\) captures how strongly individual features vary across examples and how pairs of features vary together. Centering removes the mean activation, while trace normalization removes overall activation scale. Geometrically, \(\rho\) can be viewed as a \textbf{variance ellipsoid with unit total variance}: its principal directions describe dominant directions of variation in the representation, and its eigenvalues describe how the total variation is distributed among them.

{\color{black}Because the state is defined in feature coordinates, the resulting geometry is invariant to common orthogonal transformations but not to arbitrary invertible reparameterizations. We therefore interpret it as the geometry of representation formation in the network's training coordinates, rather than as a quotient over all functionally equivalent parameterizations. Appendix~\ref{app:coordinates} verifies these properties under function-preserving transformations.}

To compare two representation states, we use the \textbf{Bures distance}, defined together with the root fidelity by
\begin{equation*}
F_{1/2}(\rho,\sigma)=\operatorname{Tr}\sqrt{\sqrt{\rho}\,\sigma\sqrt{\rho}},
\qquad d_B(\rho,\sigma)=\sqrt{2-2F_{1/2}(\rho,\sigma)}.
\tag{2}
\end{equation*}

{\color{black}Intuitively, Bures distance measures how much these variance ellipsoids differ while respecting their PSD structure, rather than comparing covariance entries independently. Bures--Wasserstein geometry is well studied on PSD matrices \citep{bures1969,uhlmann1976,takatsu2011,bhatia2019}. Bures geometry has also been used to compare neural representations through stimulus Gram matrices and their connection to shape alignment \citep{harvey2023}. Here we instead apply Bures geometry to feature covariance states in a common feature coordinate system and follow their evolution over training. Importantly for our construction, its infinitesimal metric is QFI, allowing \textbf{local sensitivity and global movement of the representation to be described within the same geometry}. Appendix~\ref{app:operational} gives an operational interpretation of Bures proximity in terms of normalized quadratic representation statistics.}

\subsection{Local Geometry: Which Input Changes Matter?}

Having defined a representation state, we next ask what that representation is locally sensitive to. Let \(\phi=(\phi_1,\ldots,\phi_m)\) parameterize controlled changes to the input, such as an object's position, scale, or known latent factors in synthetic data. A small change \(d\phi\) moves the representation from \(\rho(\phi)\) to \(\rho(\phi+d\phi)\). We want to identify which input directions produce the largest changes in representation space.

\textbf{Quantum Fisher information (QFI)} provides this local geometry \citep{braunstein1994,petz2011}. For a differentiable state \(\rho(\phi)\) with eigenpairs \((\lambda_i,|i\rangle)\), the QFI matrix is

\vspace{-0.4\baselineskip}
\begin{equation*}
F_{ab}
=
2\,\mathrm{Re}
\sum_{i,j:\lambda_i+\lambda_j>0}
\frac{
\langle i|\partial_a\rho|j\rangle
\langle j|\partial_b\rho|i\rangle
}{
\lambda_i+\lambda_j
}.
\tag{3}
\end{equation*}

The diagonal entries \(F_{aa}\) measure sensitivity to individual factors, while the off-diagonal entries capture local geometric coupling between factor directions. More generally, \(d\phi^\top Fd\phi\) measures the representation's sensitivity to any small combination of input changes.

The connection to Bures distance is direct \citep{braunstein1994,petz2011}.

\begin{theorem}[Local Bures--QFI compatibility]\label{thm:local-qfi} For a smooth full-rank family of representation states \(\rho(\phi)\),

\vspace{-0.4\baselineskip}
\begin{equation*}
d_B^2\!\left(\rho(\phi),\rho(\phi+d\phi)\right)
=
\frac14\,d\phi^\top F(\phi)d\phi
+
o(\|d\phi\|^2).
\tag{4}
\end{equation*}

\end{theorem}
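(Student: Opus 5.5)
Reduce to a one-parameter curve and expand the root fidelity to second order. Fix a direction \(v\) and set \(d\phi=\epsilon v\), \(\rho_\epsilon=\rho(\phi+\epsilon v)\), \(\rho=\rho_0\), \(\dot\rho=\partial_\epsilon\rho_\epsilon|_0=\sum_a v_a\partial_a\rho\). Since \(d_B^2=2-2F_{1/2}\), it suffices to show
\[
F_{1/2}(\rho,\rho_\epsilon)=1-\tfrac18\epsilon^2 v^\top F v+o(\epsilon^2).
\]
Polarization then recovers the quadratic form \(d\phi^\top F d\phi\). For uniformity of the \(o(\|d\phi\|^2)\) term over directions, I would use smoothness of \(\rho\) in \(\phi\) together with compactness of the unit sphere, taking Taylor remainders in \(\phi\) directly.

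\textbf{Main step: second-order expansion of the fidelity.} Work in the eigenbasis of the full-rank \(\rho\), with \(\lambda_i>0\). Let \(M_\epsilon=\sqrt\rho\,\rho_\epsilon\sqrt\rho\), so \(M_0=\rho^2\) and \(\sqrt{M_0}=\rho\). Write \(M_\epsilon=\rho^2+\epsilon A+\epsilon^2 B+o(\epsilon^2)\), where
\[
A=\sqrt\rho\,\dot\rho\sqrt\rho,\qquad B=\tfrac12\sqrt\rho\,\ddot\rho\sqrt\rho .
\]
Because \(\rho^2\) is positive definite, the matrix square root is smooth (indeed analytic) near \(M_0\). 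So \(\sqrt{M_\epsilon}=\rho+\epsilon X+\epsilon^2 Y+o(\epsilon^2)\), with \(X,Y\) determined by the Sylvester equations
\[
\rho X+X\rho=A,\qquad \rho Y+Y\rho=B-X^2 .
\]
Entrywise,
\[
X_{ij}=\frac{\sqrt{\lambda_i\lambda_j}\,\dot\rho_{ij}}{\lambda_i+\lambda_j},\qquad
\operatorname{Tr}Y=\sum_i\frac{(B-X^2)_{ii}}{2\lambda_i}.
\]

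\textbf{Assembling the trace.} Using unit trace, \(\operatorname{Tr}\dot\rho=\operatorname{Tr}\ddot\rho=0\).
The first-order term is \(\operatorname{Tr}X=\sum_i\dot\rho_{ii}/2=0\).
For the second-order term, \(\sum_i B_{ii}/(2\lambda_i)=\tfrac14\operatorname{Tr}\ddot\rho=0\). What remains is
\[
-\sum_i\frac{(X^2)_{ii}}{2\lambda_i}
=-\sum_{i,j}\frac{\lambda_j|\dot\rho_{ij}|^2}{2(\lambda_i+\lambda_j)^2}.
\]
Symmetrizing in \(i\leftrightarrow j\) gives
\[
-\tfrac14\sum_{i,j}\frac{|\dot\rho_{ij}|^2}{\lambda_i+\lambda_j}=-\tfrac18\,v^\top F v,
\]
since by (3), \(v^\top Fv=2\sum_{i,j}|\dot\rho_{ij}|^2/(\lambda_i+\lambda_j)\). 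Hence \(d_B^2=2-2F_{1/2}=\tfrac14\epsilon^2 v^\top Fv+o(\epsilon^2)\), which is (4).

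\textbf{Expected obstacle.} The delicate point is justifying the expansion of the matrix square root, including its \(o(\epsilon^2)\) remainder, and getting the symmetrization bookkeeping right. Full rank is exactly what makes the Sylvester operator \(Z\mapsto\rho Z+Z\rho\) invertible and the square root differentiable at \(\rho^2\); without it the expansion can fail on the kernel. I would justify differentiability by the inverse function theorem applied to \(S\mapsto S^2\) at \(S=\rho\), whose derivative is this invertible Sylvester operator. The Fréchet remainders then follow from \(C^2\) smoothness.
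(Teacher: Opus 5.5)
Your proof is correct. The first-order term is $\operatorname{Tr}X=\tfrac12\operatorname{Tr}\dot\rho=0$, the $B$-term is $\tfrac14\operatorname{Tr}\ddot\rho=0$, and symmetrizing the $X^2$ term gives exactly $-\tfrac18 v^\top Fv$.

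Your argument shares the paper's skeleton but goes one level deeper. The paper never expands the fidelity. It takes the infinitesimal Bures metric $d_B^2(\rho,\rho+d\rho)=\tfrac12\sum_{i,j}|\langle i|d\rho|j\rangle|^2/(\lambda_i+\lambda_j)+o(\|d\rho\|^2)$ from the literature as (B.1), then substitutes $d\rho=\sum_a\partial_a\rho\,d\phi_a$ to read off $\tfrac14F$. Your Sylvester-equation expansion of $\sqrt{\sqrt\rho\,\rho_\epsilon\sqrt\rho}$ is in effect a proof of that cited formula.

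Your version is self-contained and makes the hypotheses visible. Full rank is exactly what makes $Z\mapsto\rho Z+Z\rho$ invertible, and unit trace is what removes the linear term, which the paper uses only implicitly.

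The paper's formulation buys economy instead. Because (B.1) is stated for the finite increment $\Delta=\rho(\phi+d\phi)-\rho(\phi)$ with an $o(\|\Delta\|^2)$ remainder, uniformity in $d\phi$ is automatic. Only first-order differentiability of $\rho$ is needed, and there is no $\ddot\rho$ term to track.

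You can get both benefits by running your computation with a general traceless Hermitian $\Delta$ in place of $\epsilon\dot\rho+\tfrac12\epsilon^2\ddot\rho$, so that $B=0$. Taylor's theorem for the smooth map $\sigma\mapsto\operatorname{Tr}\sqrt{\sqrt\rho\,\sigma\sqrt\rho}$ at $\sigma=\rho$ then yields (B.1) directly. The $\ddot\rho$ bookkeeping, the compactness argument and the polarization step all become unnecessary.

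In your inverse-function-theorem step, also state that the local inverse of $S\mapsto S^2$ coincides with the principal square root on positive semidefinite matrices near $\rho^2$. This follows from continuity of the principal root together with local uniqueness of the inverse.
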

Thus, \textbf{QFI is the local ruler of representation change}: among equally sized input perturbations, directions with larger QFI move the representation farther in Bures geometry. In particular, the leading eigenvector of \(F\) identifies the most sensitive local input direction.

This gives QFI a concrete interpretation for learned representations: it tells us \textbf{which controlled variations most strongly organize the local representation geometry, and how this sensitivity changes during learning}. QFI requires neither labels nor an output loss at evaluation and measures sensitivity rather than causality. We estimate \(\partial_a\rho\) using centered finite differences. Appendix~\ref{app:details} gives estimator details and Appendix~\ref{app:proof1} proves Theorem 1.

\subsection{Global Geometry: Representation Trajectories}

We next use Bures distance to characterize how a representation moves over training. For checkpoints \(t_0<t_1<\cdots<t_K\), the states \(\rho_{t_0},\ldots,\rho_{t_K}\) form a trajectory through representation space. We define its total path length and endpoint displacement as

\vspace{-0.4\baselineskip}
\begin{equation*}
L_B
=
\sum_{k=0}^{K-1}
d_B(\rho_{t_k},\rho_{t_{k+1}}),
\qquad
D_B
=
d_B(\rho_{t_0},\rho_{t_K}).
\tag{5}
\end{equation*}

Their ratio defines the \textbf{tortuosity}

\vspace{-0.4\baselineskip}
\begin{equation*}
\tau=\frac{L_B}{D_B},
\qquad D_B>0.
\tag{6}
\end{equation*}

By the triangle inequality, \(\tau\geq1\). \vTwoChange{Values near one correspond to a relatively direct route between the initial and final representations, while larger values indicate a less direct path: the representation travels substantially farther during training than its net endpoint displacement.}

Tortuosity therefore preserves information about \emph{how} a representation forms that its endpoints discard. We do not assume that more direct trajectories are inherently better; additional path length may reflect intermediate feature construction, reorganization, or backtracking required by the computation being learned. We instead ask two empirical questions: whether path complexity varies systematically with the computation being learned, and whether it carries information about the resulting representation that its endpoints do not. Because discrete path length depends on temporal resolution, every path claim is tested under checkpoint subsampling.

\subsection{Pullback Geometry and Representation Natural Gradient}

The geometry above also suggests a way to intervene on representation formation. Prior work provides evidence that excessive changes to useful features can be harmful, while constraining representations too strongly can limit plasticity \citep{lihoiem2016,kumar2022,gomezvilla2022}. This motivates a more targeted principle: \textbf{for a given local improvement in the objective, avoid representational movement that is not required to achieve that improvement.}

Let \(\rho(\alpha)\) denote the representation state when parameters vary within coordinates \(\alpha\). A small update induces \(d\rho=\sum_i(\partial_i\rho)d\alpha_i\). Substituting these derivatives into Eq. (3) produces the representation-space QFI pullback \(G_{\mathrm{rep}}\), satisfying

\vspace{-0.4\baselineskip}
\begin{equation*}
d_B^2
\bigl(
\rho_\alpha,
\rho_{\alpha+d\alpha}
\bigr)
\approx
\frac14
d\alpha^\top
G_{\mathrm{rep}}
d\alpha.
\tag{7}
\end{equation*}

Intuitively, \(G_{\mathrm{rep}}\) measures \textbf{how strongly different parameter directions move the internal representation}. Euclidean gradient descent treats equal parameter displacements as equally large; RepNG instead measures an update according to its predicted effect on the representation.

For damping \(\lambda>0\), define \(G_\lambda = G_{\mathrm{rep}}+\lambda I.\) The representation natural-gradient update is

\vspace{-0.4\baselineskip}
\begin{equation*}
\Delta\alpha
=
-\gamma
G_\lambda^{-1}
\nabla_\alpha\mathcal L.
\tag{8}
\end{equation*}

\begin{theorem}[RepNG under a representation-change budget]\label{thm:repng} Let \(g=\nabla_\alpha\mathcal L\neq0\) and \(G_\lambda\succ0\). Among updates satisfying \(\delta^\top G_\lambda\delta \leq \epsilon^2,\) the largest first-order loss decrease is attained by

\begin{equation*}
\delta^\star
=
-\epsilon
\frac{
G_\lambda^{-1}g
}{
\sqrt{
g^\top G_\lambda^{-1}g
}
}.
\tag{9}
\end{equation*}

\end{theorem}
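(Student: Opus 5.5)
The plan is to reduce Theorem~\ref{thm:repng} to a Cauchy--Schwarz inequality in the inner product defined by \(G_\lambda\). To first order, the loss change under an update \(\delta\) is \(\mathcal L(\alpha+\delta)-\mathcal L(\alpha)\approx g^\top\delta\). So ``largest first-order loss decrease'' means minimizing the linear functional \(g^\top\delta\) over the ellipsoid \(\{\delta:\delta^\top G_\lambda\delta\le\epsilon^2\}\). Since \(G_\lambda=G_{\mathrm{rep}}+\lambda I\) with \(G_{\mathrm{rep}}\succeq0\) (it is a pulled-back QFI, a Gram-type quadratic form) and \(\lambda>0\), we have \(G_\lambda\succ0\). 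Hence \(G_\lambda\) has a symmetric positive-definite square root \(G_\lambda^{1/2}\) and an inverse, and the quantity \(g^\top G_\lambda^{-1}g\) is strictly positive because \(g\neq0\). This last point makes \(\delta^\star\) well defined.

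The key step is to change variables to \(u=G_\lambda^{1/2}\delta\). The constraint becomes \(\|u\|_2\le\epsilon\), and the objective becomes \(g^\top\delta=(G_\lambda^{-1/2}g)^\top u\). Cauchy--Schwarz then gives
\begin{equation*}
g^\top\delta\;\ge\;-\|G_\lambda^{-1/2}g\|_2\,\|u\|_2\;\ge\;-\epsilon\sqrt{g^\top G_\lambda^{-1}g}.
\end{equation*}
Equality holds if and only if \(\|u\|=\epsilon\) and \(u\) is a negative multiple of \(G_\lambda^{-1/2}g\). That forces \(u=-\epsilon\,G_\lambda^{-1/2}g/\|G_\lambda^{-1/2}g\|\). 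Mapping back via \(\delta=G_\lambda^{-1/2}u\) yields exactly the \(\delta^\star\) in Eq.~(9). A direct check then confirms the two required facts: \(\delta^{\star\top}G_\lambda\delta^\star=\epsilon^2\), so \(\delta^\star\) is feasible, and \(g^\top\delta^\star=-\epsilon\sqrt{g^\top G_\lambda^{-1}g}\), so it attains the bound.

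The equality condition of Cauchy--Schwarz also gives uniqueness of the minimizer, which is worth stating. As a cross-check, one can instead use a Lagrangian argument. The optimum lies on the boundary since the objective is linear and nonzero, and stationarity gives \(g+2\mu G_\lambda\delta=0\), i.e.\ \(\delta\propto-G_\lambda^{-1}g\). The multiplier is then fixed by the active constraint.

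There is no real obstacle here. The only point needing care is interpretation: the statement concerns the \emph{first-order} (linearized) decrease, so the argument should say explicitly that it optimizes \(g^\top\delta\) rather than \(\mathcal L\) itself. The step to get right is justifying \(G_\lambda\succ0\), hence the existence of \(G_\lambda^{\pm1/2}\) and a nonzero denominator. The theorem already assumes this, and it follows from \(\lambda>0\) together with \(G_{\mathrm{rep}}\succeq0\). The last sentence of the argument notes that \(\delta^\star\) is a positive rescaling of the RepNG step in Eq.~(8), with \(\gamma=\epsilon/\sqrt{g^\top G_\lambda^{-1}g}\).
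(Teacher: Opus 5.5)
Your proposal is correct and follows the paper's proof essentially verbatim: the substitution \(u=G_\lambda^{1/2}\delta\) turns the ellipsoid into a Euclidean ball, Cauchy--Schwarz gives the bound \(-\epsilon\sqrt{g^\top G_\lambda^{-1}g}\), and mapping the equality case back yields \(\delta^\star\). Your uniqueness remark (from the equality case of Cauchy--Schwarz) and the Lagrangian cross-check are valid additions that the paper does not spell out.
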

Thus RepNG gives the steepest local loss-improving direction when update size is measured by the damped representation metric. As in natural-gradient methods, the inverse metric gives the steepest-descent direction under the corresponding local metric constraint \citep{amari1998,martens2020}. When \(G_{\mathrm{rep}}\) is positive definite, the equivalent dual formulation minimizes predicted local representation movement for a fixed first-order loss improvement; damping provides stabilization when the pullback is singular or poorly conditioned.

This result is local. It does not imply a globally shortest representation trajectory. Whether locally representation-aware updates produce a more direct route over training is therefore an empirical question, tested in Section 4.4.

\section{Experiments}

The experiments ask four questions. First, does the local geometry identify the factors a representation learns to use? Second, does global path geometry reflect the structure of the computation being learned? Third, does that path contain information about the learned representation that its endpoint does not? Finally, can representation-aware optimization alter the path itself? We test these questions through controlled QFI studies, \vTwoChange{interaction-order and image-domain transfer experiments}, a fresh downstream-stability sweep, and a representation-aware optimization intervention. \vTwoChange{Core experiments use ten seeds unless noted; the downstream and primary optimizer studies use 30 fresh or paired seeds, respectively.} The same evaluation examples are used at every checkpoint within each trajectory. Full protocols are given in Appendix~\ref{app:details}.

\subsection{Local Geometry Tracks Task-Relevant Factors}

We first test whether the local geometry captures which input variations are important to a learned representation. We begin with a controlled setting in which the relevant factors are known. Inputs contain eight Gaussian coordinates, with mild correlations only between pairs of nuisance coordinates, while only the signs of the first two determine four balanced classes. We train an \(8\rightarrow64\rightarrow32\rightarrow4\) MLP for 300 epochs and compute QFI on held-out examples using multiplicative perturbations of the eight factors.

If \(F=V\Lambda V^\top,\) we measure the fraction of total QFI sensitivity lying in the two-dimensional task-relevant subspace by

\begin{equation*}
m_{\mathrm{rel}}
=
\sum_k
\frac{\lambda_k}{\sum_\ell\lambda_\ell}
\left(
V_{1k}^2+V_{2k}^2
\right).
\tag{10}
\end{equation*}

{\color{black}At initialization, \(m_{\mathrm{rel}}=0.220\pm0.017,\) close to the no-preference reference \(2/8=0.25\). After learning it rises to \(0.931\pm0.016,\) a paired increase of \(0.712\) (\(t_9=51.2,\ p<10^{-10}\); Figure~\ref{fig:qfi}a). A representation-Jacobian score, which measures raw activation change, reaches \(0.841\pm0.020\), while a supervised loss-gradient reference reaches \(0.997\pm0.0006\) (Figure~\ref{fig:qfi}b). Uncertainties are 95\% Student-\(t\) confidence intervals. The loss-gradient reference uses labels and the objective at evaluation; QFI does not. Thus, \textbf{learning reorganizes the local covariance geometry so that sensitivity concentrates on factors relevant to the learned task}.}

\begin{figure}[t]
\centering\includegraphics[width=.98\linewidth]{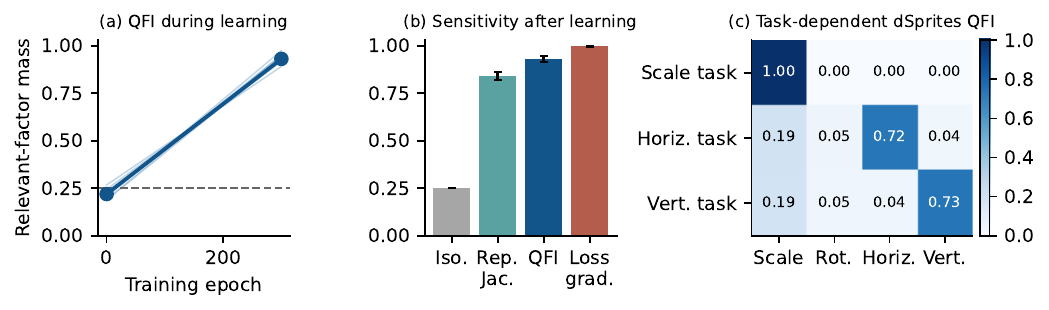}
\caption{{\color{black}\textbf{Local geometry tracks task relevance.} (a) Learning concentrates QFI on the two relevant synthetic factors. (b) QFI captures task relevance beyond raw activation sensitivity, measured by the representation-Jacobian baseline, without using the labels required by the loss-gradient reference. (c) On dSprites, changing only the prediction target reorganizes QFI around the task-relevant visual factor. Error bars are 95\% confidence intervals.}}
\label{fig:qfi}
\end{figure}

We next ask whether this geometry follows the \textbf{task being learned}, rather than simply reflecting dominant variation in the input distribution. dSprites \citep{matthey2017} is particularly useful because its rendered images are generated from known, independently controlled factors, including scale, rotation, and position. This lets us change the computation required of the network while holding the visual domain, architecture, output dimension, splits, and training budget fixed.

\vTwoChange{Using the same \(1024\rightarrow96\rightarrow24\rightarrow4\) MLP, we train separate four-way classifiers for scale, horizontal position, and vertical position. For each held-out image, QFI is evaluated using matched latent perturbations normalized by their Euclidean pixel-space displacement.}

{\color{black}The local geometry reorganizes with the target (Figure~\ref{fig:qfi}c). Relevant-factor QFI mass is \(1.000\pm0.0001\) for scale, \(0.724\pm0.037\) for horizontal position, and \(0.728\pm0.047\) for vertical position. In every task, the target factor exceeds the largest irrelevant-factor mass in paired runs (\(p<10^{-6}\)). \textbf{Thus, holding the image domain and network interface fixed while changing the task reorganizes the representation around the factor relevant to that learned task.}}

{\color{black}These experiments demonstrate task-dependent organization of the local geometry: QFI identifies not merely directions of large activation variation, but the controlled factors around which the learned computation organizes its representation.}

\subsection{Higher-Order Computations Follow Less Direct Representation Trajectories}

We next ask whether the \textbf{structure of the computation being learned} is reflected in the global geometry of representation formation. We focus on interaction order: how many independently varying factors must be jointly combined to determine the target. If learning higher-order computations requires more intermediate feature construction or reorganization, this may appear as increased path tortuosity.

We first test this with ten-dimensional Gaussian inputs. For interaction order \(k\in\{1,2,3,4\},\) the target is the sign pattern of the first \(k\) coordinates, yielding \(2^k\) balanced classes. Increasing \(k\) therefore increases the number of factors that must be combined to determine the target while leaving the underlying input distribution fixed. Within each seed, conditions share raw inputs and splits. A \(10\rightarrow64\rightarrow32\rightarrow2^k\) MLP is trained for 180 epochs, and mean test accuracy exceeds \(95.7\%\) in every condition.

{\color{black}At epoch resolution, mean tortuosity increases monotonically: \(1.261\pm0.042,\quad 1.607\pm0.045,\quad 1.838\pm0.055,\quad 2.049\pm0.058\) for \(k=1,\ldots,4\), respectively (95\% Student-\(t\) confidence intervals; Figure~\ref{fig:complexity}a).}

Because output class count also increases with \(k\), we next isolate interaction order using \textbf{balanced binary tasks with identical output dimension}. For each \(k\), the label is the parity of the first \(k\) signs, that is, whether an odd or even number of those factors are positive. Every condition therefore has the same binary output and input distribution, while determining the target still requires jointly combining all \(k\) factors. Runs stop when validation accuracy first reaches \(75\%\), allowing trajectories to be compared at similar predictive performance.

{\color{black}\textbf{The ordering remains (Figure~\ref{fig:complexity}b):} mean tortuosity increases from \(1.014\) to \(1.723\), \(1.872\), and \(4.177\), with all ten within-seed slopes positive (\(p=2.0\times10^{-7}\)); a seed-preserving permutation test confirms the association (\(p=5.0\times10^{-6}\)). The relationship also holds at matched training duration: across common budgets from 20 to 300 updates, tortuosity retains a positive within-seed relationship with interaction order (all \(p\leq6.9\times10^{-7}\)). Thus, higher interaction order predicts less direct trajectories under both matched-performance and matched-duration comparisons (Appendix~\ref{app:duration}).}

\begin{figure}[t]
\centering\includegraphics[width=.98\linewidth]{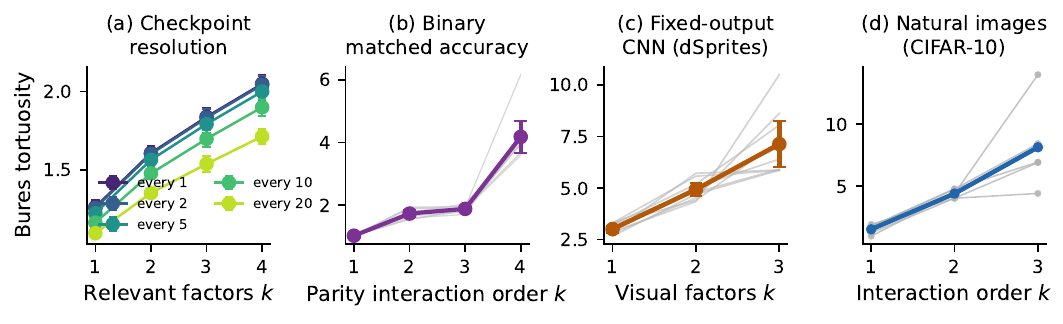}
\caption{\vTwoChange{\textbf{Interaction order shapes representation path complexity.} (a) Checkpoint spacings; (b) fixed-output matched-accuracy tasks; (c) dSprites CNNs; and (d) composite CIFAR-10 tasks at approximately matched validation performance.}}
\label{fig:complexity}
\end{figure}

\vTwoChange{\textbf{Architecture and image-domain transfer.} On dSprites, a fixed-output residual CNN shows increasing tortuosity across one-, two-, and three-factor parity tasks (\(3.01\to4.93\to7.13\)), with the ordering holding in all 10/10 seeds and a significant positive within-seed trend (\(p=1.86\times10^{-5}\); Figure~\ref{fig:complexity}c). On natural-image tasks from independently sampled CIFAR-10 images, animal-versus-vehicle \(k\)-way parity likewise increases tortuosity (\(1.54\to4.41\to8.13\)) in all 5/5 paired seeds at approximately matched validation performance, with a significant positive within-seed trend (\(p=0.019\); Figure~\ref{fig:complexity}d). Thus, the interaction-order relationship extends across architectures and from synthetic and rendered inputs to natural images. Full experimental details and statistics are provided in Appendix~\ref{app:transfer}.}

{\color{black}\textbf{Robustness.} The observed geometry is robust to several measurement choices. The interaction-order ordering persists under substantial checkpoint subsampling and when Bures distance is replaced by Frobenius or CKA-derived distances. QFI estimates are likewise stable across finite-difference steps and sample sizes, with task-relevant concentration increasing across hidden layers (Figure~\ref{fig:robustness}; Appendix~\ref{app:robustness}).}

\begin{figure}[t]
\centering\includegraphics[width=.924\linewidth]{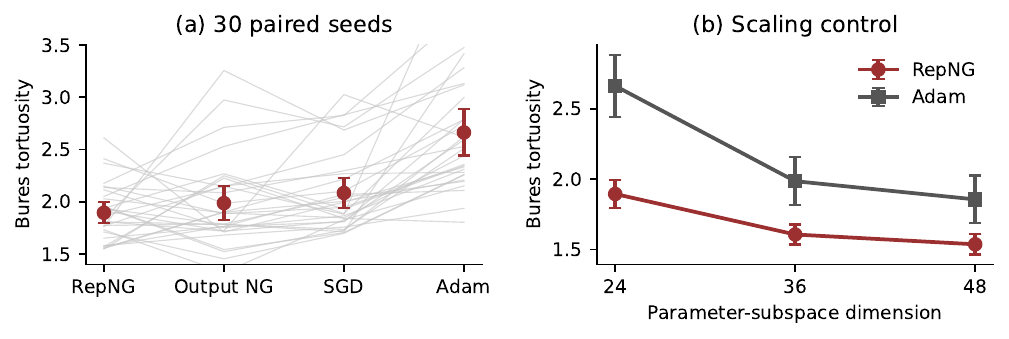}
\caption{{\color{black}\textbf{Metric intervention.} (a) Validation-matched paired comparison. (b) RepNG remains more direct than Adam as the shared parameter subspace grows. Error bars are 95\% confidence intervals.}}
\label{fig:optimizer}
\end{figure}

\subsection{Tortuosity Predicts Subsequent Representational Stability}

\vTwoChange{So far, path geometry has served as a signature of the computation being learned. We next ask a stronger question: does formation history predict properties of the resulting representation beyond what can be inferred from task structure or endpoint displacement? In a fresh 30-seed sweep, we measure two forms of subsequent stability: \textbf{continued-training drift}, or how much the representation changes over 30 additional epochs, and \textbf{parameter sensitivity}, or how much it moves under small perturbations to the learned weights. Controlling for interaction order asks whether path geometry \emph{within the same task} predicts subsequent representational behavior.}

Tortuosity predicts both continued-training drift (standardized \(\beta=0.87,\ p=9.2\times10^{-4}\)) and parameter sensitivity (\(\beta=0.51,\ p=2.4\times10^{-4}\)); both associations are confirmed by seed-level permutation tests. Endpoint displacement predicts neither. \vTwoChange{Thus, formation history predicts subsequent representational stability beyond task complexity and initial-to-final displacement.} Full analysis details are in Appendix~\ref{app:downstream}.

\subsection{Representation-Aware Optimization Changes Path Directness}
We have shown that trajectory geometry reflects computational structure and predicts subsequent representational stability. We finally ask whether it is merely an observational property of learning, or whether the same geometry can be used to intervene on representation formation.

\noindent
\begin{minipage}[t]{0.47\textwidth}
\vspace{0pt}
All optimizers operate in the same random 24-dimensional parameter subspace of a \(6\rightarrow16\rightarrow8\rightarrow4\) network, with hyperparameters selected on disjoint pilots and stopping at \(72\%\) validation accuracy. Mean test accuracies span only \(71.1\%\)--\(72.1\%\). Across 30 paired seeds, tortuosity is \(1.895\pm0.097\) for RepNG, \(1.987\pm0.155\) for output natural gradient, \(2.084\pm0.138\) for SGD, and \vTwoChange{\(2.663\pm0.211\)} for Adam (Figure~\ref{fig:optimizer}a).
\end{minipage}\hfill
\begin{minipage}[t]{0.50\textwidth}
\vspace{0pt}
\centering
\scalebox{1}[0.75]{\includegraphics[width=0.96\linewidth,trim=4 25 4 25,clip]{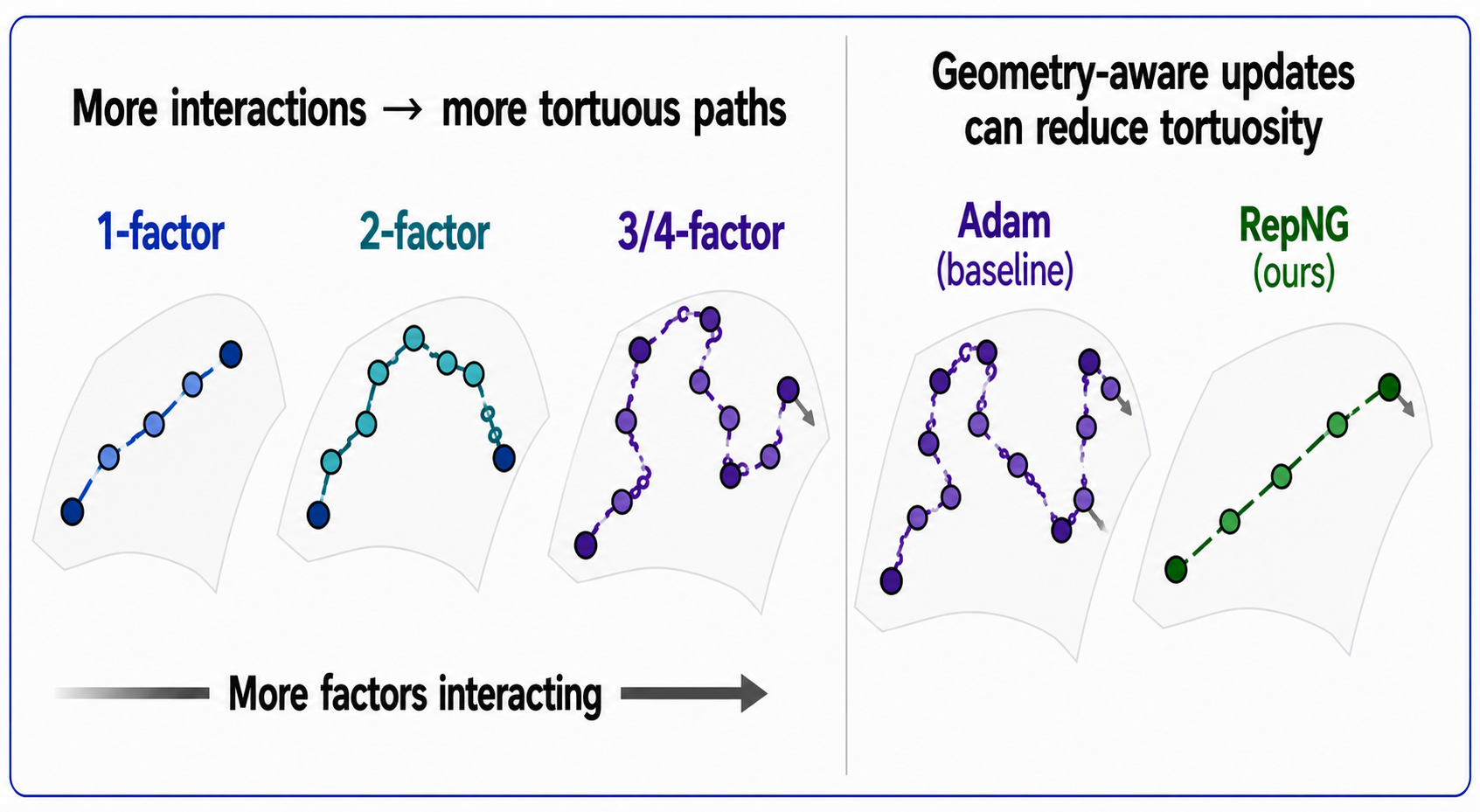}}
\refstepcounter{figure}\label{fig:illustrative-summary}

\vspace{-0.4\baselineskip}
{\small Figure~\thefigure: \textbf{Path geometry.} Interaction order increases tortuosity; representation-aware optimization can reduce it.}
\end{minipage}

\vspace{0.4\baselineskip}

\vTwoChange{Together, these experiments show that the geometry used to describe representation formation can also be used to alter its trajectory. At matched predictive performance, RepNG yields more direct paths than Adam and Euclidean SGD, while its difference from output natural gradient is unresolved. Under identical update norms and training budgets, however, the representation-derived direction produces substantially more direct trajectories than output natural gradient across 20 confirmatory seeds (Appendix~\ref{app:norm-matched}). RepNG therefore provides a proof of concept that representation formation is not only measurable but directly intervenable through its geometry.}

\section{Discussion and Conclusion}
We introduced an information-geometric framework for studying how neural representations form over training. The central finding is path-level: the computation being learned is reflected not only in the representation reached, but in the route by which it forms. QFI reveals which factors organize the local geometry, while tortuosity characterizes the global route through representation space. Higher interaction order produces systematically less direct paths, and controlling for interaction order, those paths predict the subsequent stability of the learned representation, both under continued training and small parameter perturbations. The route therefore contains information about the learned representation that accuracy and endpoint displacement alone do not, making formation history relevant to mechanistic analyses of learning.

\vTwoChange{The relationship between computational structure and path geometry generalizes across qualitatively different settings, from Gaussian inputs and MLPs to residual CNNs on rendered and natural images. Further, representation formation is not merely observable but controllable: RepNG uses the representation geometry itself to produce systematically more direct trajectories at matched performance. Together, these results establish formation history as an informative, predictive, and controllable object of study. Extending these analyses to larger-scale architectures and scaling RepNG to higher-dimensional parameter spaces are important directions for future work. Thus, representation formation provides a new lens on learning: its path reveals properties of the computation being constructed, predicts subsequent stability, and can itself be altered.}

\section*{Reproducibility Statement}
Section~4 reports the evaluation protocol, seed counts, uncertainty estimates, stopping criteria, and controls used for each principal claim. Appendix~\ref{app:details} provides complete architectures, data generation and preprocessing, state estimation, finite-difference, optimization, and hyperparameter-selection details. Appendices~\ref{app:proof1}--\ref{app:operational} state the assumptions and give complete proofs, while Appendices~\ref{app:robustness}--\ref{app:downstream} report robustness, intervention, and representational-stability analyses. The experiment code is available from the \href{https://github.com/chimeraki/information-geometry-representation-formation}{GitHub repository}. The supplementary material contains the figure-generation code, per-seed result files, and numerical unit tests needed to audit and reproduce the reported quantities.

\section*{AI Use Statement}
Large language models were used for writing, editing, formatting, identifying relevant literature, proof checking, figure preparation, and coding assistance. LLMs were not an original or non-standard component of the core methodology, conceptual framework, interpretation, or experimental design, and did not affect the scientific contribution. AI-assisted code was checked and verified using independent experimental runs, robustness analyses, executable tests, and per-seed results. All scientific claims, analyses, proofs, and reported results were reviewed and verified by the authors, who take responsibility for the final work.

{\small\bibliographystyle{plainnat}\bibliography{references}}
\appendix
\section{Experimental Protocol and Implementation Details}\label{app:details}

\subsection{Common Protocol}

\vTwoChange{Core and robustness studies use ten random seeds unless otherwise stated. The primary optimizer comparison uses 30 paired seeds, and its larger-subspace controls use 20 paired seeds.}

For every checkpoint, we evaluate the same held-out examples, subtract the held-out activation mean, and normalize the covariance by its trace. Using an identical evaluation set across checkpoints prevents sample variation from appearing as representation movement.

All state, Bures, and QFI operations use float64. States are normalized Gram matrices and therefore PSD by construction; after symmetrization, the implementation clamps every negative eigenvalue to zero before a matrix square root. QFI includes an eigenpair term only when \(\lambda_i+\lambda_j>10^{-12}\). State construction raises an error when trace energy is at most \(10^{-18}\).

For coordinate \(a\), the state derivative is estimated by the centered difference

\begin{equation*}
\partial_a\rho
\approx
\frac{
\rho(\phi+\epsilon e_a)
-
\rho(\phi-\epsilon e_a)
}{
2\epsilon
},
\tag{A.1}
\end{equation*}

using identical examples for the positive and negative perturbations. Because both endpoint states are symmetric and \vTwoChange{trace-normalized}, their centered difference is symmetric and trace zero up to floating-point error; the implementation applies no additional tangent-space projection. QFI terms whose denominator satisfies \(\lambda_i+\lambda_j\leq10^{-12}\) are omitted, implementing the support-restricted expression for rank-deficient states.

{\color{black}Theorem~1 states the smooth full-rank case for clarity, whereas numerical QFI uses this support-restricted extension. Appendix~\ref{app:qfi-threshold} verifies that the main QFI result is unchanged across thresholds from \(10^{-10}\) to \(10^{-13}\).}

{\color{black}Comparisons sharing data and initialization are paired by seed. Hyperparameters are selected using disjoint pilot seeds. Unless otherwise stated, \(p\)-values use two-sided paired \(t\)-tests; rank correlations use Spearman tests. Error bars denote 95\% confidence intervals computed with the two-sided Student-\(t\) critical value for the available number of seeds. Test data are reserved for final evaluation whenever a validation-based stopping rule is used.}

\subsection{Synthetic QFI Factor-Relevance Experiment}

For each of ten seeds, we draw 5,000 examples with eight initially independent standard-normal coordinates, then set \(x_4\leftarrow x_4+0.35x_3\) and \(x_6\leftarrow x_6+0.25x_5\). Thus, the task-relevant coordinates \(x_1,x_2\) remain independent, while two nuisance pairs have mild correlation. Labels are the four sign combinations of \(x_1,x_2\). A seeded permutation assigns 3,000 examples to training and 2,000 to a held-out set; 1,000 examples sampled from the held-out set are used for the reported QFI audit. The network is an \(8\rightarrow64\rightarrow32\rightarrow4\) tanh MLP trained for 300 full-batch epochs using Adam with learning rate \(5\times10^{-3}\) and weight decay \(10^{-4}\).

The multiplicative probe perturbs coordinate \(j\) as \(x_j\mapsto(1+\phi_j)x_j\) around \(\phi=0\).

If \(F=V\Lambda V^\top,\) the relevant-factor QFI mass is

\begin{equation*}
m_{\mathrm{rel}}
=
\sum_k
\frac{\lambda_k}{\sum_\ell\lambda_\ell}
\left(
V_{1k}^2+V_{2k}^2
\right).
\tag{A.2}
\end{equation*}

Each QFI eigenvector contributes in proportion to its eigenvalue, so \(m_{\mathrm{rel}}\) measures the fraction of total local sensitivity lying in the two-dimensional task-relevant subspace. An isotropic geometry has expected mass \(2/8=0.25.\) The representation-Jacobian baseline assigns each coordinate the squared Frobenius norm of the centered activation derivative. The supervised reference uses the squared loss derivative. All three masses use the same held-out set and relevant-coordinate definition.

\subsection{dSprites Local-Geometry Experiment}

We use the official dSprites archive, max-pool images from \(64\times64\) to \(32\times32,\) and fix the asymmetric heart shape while sampling scale, orientation, and horizontal and vertical position.

The MLP is \(1024\rightarrow96\rightarrow24\rightarrow4.\) Scale, horizontal position, and vertical position are each divided into four equal-width classes. Each task uses 5,000 training, 1,000 validation, and 1,000 test images and trains for 45 epochs with minibatches of 256 using Adam with learning rate \(2\times10^{-3}\) and weight decay \(10^{-4}\).

QFI uses 600 independently rendered base configurations. For each factor, exact matched images one latent grid step above and below the base configuration are rendered. Each state difference is divided by half the mean \vTwoChange{Euclidean} pixel-space distance between its corresponding image pair, placing factors in equal input-arc-length coordinates before QFI is compared. Rotation remains a varied nuisance factor but is not itself a prediction task.

\subsection{Interaction-Order Experiments}\label{app:interaction-order}

\paragraph{Multiclass sign-pattern task.}

Inputs consist of ten independent standard-normal variables. For \(k\in\{1,2,3,4\},\) the first \(k\) signs specify one of \(2^k\) balanced classes. All \(k\) conditions within a seed share raw examples and data splits.

The network is \(10\rightarrow64\rightarrow32\rightarrow2^k\) and trains for 180 full-batch epochs using Adam with learning rate \(5\times10^{-3}\) and weight decay \(10^{-4}\).

Paths are recomputed after retaining every 1st, 2nd, 5th, 10th, or 20th checkpoint, always including both endpoints.

\paragraph{Binary parity control.}

{\color{black}The output is always two-dimensional and the target is the parity of the first \(k\) signs. The same Adam learning rate \(5\times10^{-3}\) and weight decay \(10^{-4}\) are used. Training stops at the first checkpoint reaching \(75\%\) validation accuracy. Pearson and Spearman coefficients over the 40 seed--condition observations are reported descriptively. Inference respects the repeated-measures design: we fit a slope across \(k\) within each seed and test the ten slopes against zero, and independently shuffle the four interaction-order labels within each complete seed profile. The reported permutation test uses 200,000 draws with a fixed seed and the plus-one correction.}

{\color{black}For the matched-duration control, the same ten seeds, data construction, initialization pairing, optimizer, and held-out evaluation examples are used, but no accuracy stopping is applied. States are recorded after every update through epoch 300, and path statistics are recomputed at common budgets \(20,40,60,100,200,300\). At each budget, we fit a linear slope across \(k\) within each seed and test the ten paired slopes against zero with a two-sided one-sample \(t\)-test. We also report the paired \(k=4\) versus \(k=1\) contrast.}

\subsection{\vTwoChange{Residual-CNN and Image-Domain Transfer}}\label{app:transfer}

\vTwoChange{We sample 6,000 training, 1,000 validation, and 1,000 test dSprites images per seed. Binary visual factors indicate horizontal half, vertical half, and scale half. For \(k\in\{1,2,3\}\), the target is the parity of the first \(k\) factors, so all conditions use the same binary output dimension while increasing the number of visual factors that must be jointly combined.}

\vTwoChange{The network contains a 16-channel residual block, downsampling, two 32-channel residual blocks, global average pooling, and a two-way output head. AdamW trains every condition for 35 epochs with learning rate \(2\times10^{-3}\) and weight decay \(10^{-4}\). Representation states use the 32-dimensional globally pooled representation evaluated on the same 512 validation images at every epoch. Conditions within each seed share data, initialization, architecture, and evaluation examples; only the parity target changes. All ten seeds are reported.}

\begin{table}[h]
\centering
\caption{\vTwoChange{Fixed-output dSprites residual-CNN results over ten paired seeds. Entries are means with 95\% Student-\(t\) confidence intervals.}}
\label{tab:dsprites-cnn}
\scriptsize
{
\begin{tabular}{rrrrr}
\toprule
\(k\) & Test accuracy & \(L_B\) & \(D_B\) & \(\tau\) \\
\midrule
1 & \(97.95\,[97.53,98.37]\) & \(3.831\,[3.618,4.044]\) & \(1.272\,[1.256,1.288]\) & \(3.011\,[2.843,3.179]\)\\
2 & \(95.53\,[94.96,96.10]\) & \(6.041\,[5.699,6.382]\) & \(1.228\,[1.202,1.253]\) & \(4.929\,[4.594,5.263]\)\\
3 & \(88.63\,[79.41,97.85]\) & \(8.525\,[7.103,9.946]\) & \(1.192\,[1.167,1.218]\) & \(7.132\,[6.033,8.230]\)\\
\bottomrule
\end{tabular}}
\end{table}

\vTwoChange{Tortuosity increases monotonically in all 10/10 paired seeds; the mean within-seed slope is \(2.060\) (\(t_9=8.174,\ p=1.86\times10^{-5}\)). For the paired \(k=3-k=1\) comparison, \(\Delta L_B=4.694\) (\(t_9=7.330,\ p=4.42\times10^{-5}\)), \(\Delta D_B=-0.080\) (\(t_9=-7.543,\ p=3.53\times10^{-5}\)), and \(\Delta\tau=4.121\) (\(t_9=8.174,\ p=1.86\times10^{-5}\)).}

\paragraph{\vTwoChange{CIFAR-10 composite natural-image task.}}

\vTwoChange{Each input concatenates three independently sampled CIFAR-10 images horizontally to form a \(32\times96\) composite. For constituent image \(i\), \(z_i\) indicates animal versus vehicle membership: animals are bird, cat, deer, dog, frog, and horse; vehicles are airplane, automobile, ship, and truck. We sample \(z_i\sim\operatorname{Bernoulli}(0.5)\), then sample a class uniformly within its group and an image from that class. The targets are \(y=z_1\), \(y=z_1\mathbin{\mathrm{XOR}}z_2\), and \(y=z_1\mathbin{\mathrm{XOR}}z_2\mathbin{\mathrm{XOR}}z_3\) for \(k=1,2,3\), respectively, so all tasks have the same binary output dimensionality.}

\vTwoChange{Within a seed, conditions share composite inputs, data splits, architecture, initialization, batch ordering, and representation-evaluation examples; only labels differ. We use 10,000 training, 1,000 validation, and 1,000 test composites, the existing small residual CNN minimally adapted to \(32\times96\) inputs, and its 32-dimensional globally pooled representation. AdamW uses learning rate \(2\times10^{-3}\) and weight decay \(10^{-4}\). States are evaluated on a fixed 512-example validation subset at every epoch, so all selected trajectories use the same epoch-level resolution.}

\vTwoChange{For CIFAR-10, we select the epoch-level checkpoint closest to 70\% validation accuracy and compute all trajectories at a common epoch-level resolution to ensure consistent path measurements.}

\begin{table}[h]
\centering
\caption{\vTwoChange{CIFAR-10 path geometry at the selected epoch-level checkpoints. Seeds are anonymized as 1--5.}}
\label{tab:cifar-composite}
\scriptsize
{
\begin{tabular}{rrrrr}
\toprule
Seed & \(k\) & \(L_B\) & \(D_B\) & \(\tau\) \\
\midrule
1&1&1.431&1.139&1.256\\
1&2&4.380&1.083&4.045\\
1&3&7.299&1.057&6.903\\
2&1&2.110&1.179&1.790\\
2&2&5.192&1.083&4.794\\
2&3&7.759&1.119&6.937\\
3&1&2.150&1.238&1.737\\
3&2&5.557&1.167&4.763\\
3&3&8.960&1.064&8.421\\
4&1&1.277&1.277&1.000\\
4&2&5.003&1.131&4.423\\
4&3&15.852&1.137&13.948\\
5&1&2.321&1.199&1.935\\
5&2&4.305&1.065&4.041\\
5&3&5.106&1.154&4.426\\
\midrule
Mean&1&1.858&1.206&1.544\\
Mean&2&4.888&1.106&4.413\\
Mean&3&8.995&1.106&8.127\\
\bottomrule
\end{tabular}}
\end{table}

\vTwoChange{Tortuosity increases monotonically with interaction order in all 5/5 paired seeds. Path length shows the same 5/5 monotonic ordering. Mean path length increases from \(1.86\) to \(4.89\) to \(9.00\), whereas mean endpoint displacement remains comparatively stable at \(1.21\), \(1.11\), and \(1.11\). Thus, the CIFAR-10 increase in tortuosity is driven primarily by greater cumulative representation movement rather than reduced endpoint displacement.}

\subsection{Optimizer Intervention}

The optimizer study uses a \(6\rightarrow16\rightarrow8\rightarrow4\) network and 2,600 standard-Gaussian examples, split by a seeded permutation into 1,500 training, 500 validation, and 600 test examples. \vTwoChange{The four-class target is \(y=\mathbb{1}[x_1>0]+2\mathbb{1}[x_2>0]\), encoding the signs of the first two Gaussian coordinates.} Within each of 30 seeds, SGD, Adam, output natural gradient, and RepNG share the same data, initialization, and random 24-dimensional parameter subspace \(\theta=\theta_0+U\alpha.\) To construct \(U\), an i.i.d. standard-Gaussian matrix in \(\mathbb R^{P\times r}\) is drawn with NumPy seed \(s+77\), converted to float32, and orthonormalized using reduced QR; hence \(U^\top U=I_r\). The same \(U\) is shared by all optimizers within a seed. \vTwoChange{Output natural gradient uses a Monte Carlo estimate of the output Fisher in this subspace, computed from the first 64 training examples with pseudo-labels sampled from the model probabilities.} RepNG estimates parameter-state derivatives by centered differences with step \(3\times10^{-3}\), using 400 training examples sampled with seed \(s+999\), and forms the pullback \(G_{\mathrm{rep}}\) from the resulting local Bures/QFI changes.

Both natural-gradient methods use damping \(5\times10^{-3},\) and all methods use a maximum update norm of \(0.35\). Learning rates, selected on disjoint pilot seeds and then fixed, are \(2.0\) for SGD, \(0.08\) for Adam, \(0.05\) for output natural gradient, and \(0.18\) for RepNG.

Runs stop at the first checkpoint reaching \(72\%\) validation accuracy, and test data are used only after stopping. The 30-seed sweep was specified after the pilot and all successful matched runs are reported.

Scaling controls repeat the paired RepNG--Adam comparison in 36- and 48-dimensional parameter subspaces for 20 seeds.

\subsection{Representational-Stability Sweep}

This sweep uses 30 fresh seeds and the four multiclass interaction-order tasks from \vTwoChange{Appendix~\ref{app:interaction-order}}. Each run uses 3,000 training, 1,000 validation, and 2,000 test examples and trains the same \(10\rightarrow64\rightarrow32\rightarrow2^k\) MLP for 180 epochs. Tortuosity is computed from states at every epoch on a fixed set of 600 test examples. The stability outcomes are measured only after the epoch-180 trajectory is complete.

We measure continued-training drift as the Bures distance between the epoch-180 state and the state after 30 additional training epochs. Parameter sensitivity is the mean Bures displacement over five isotropic perturbations of the full parameter vector, each with norm \(2\%\) of the trained parameter norm; parameters are restored after every perturbation.

For each outcome, we fit an OLS model with interaction-order fixed effects and CR1 standard errors clustered over the 30 seeds. We additionally assess significance by permuting complete four-task tortuosity profiles across seeds, preserving within-seed task dependence.
\subsection{Robustness Sweeps and Reproducibility}

QFI is recomputed at both hidden layers for held-out sample sizes \(200,\quad500,\quad1000\) and centered-difference steps \(10^{-3}, \quad 2\times10^{-3}, \quad 5\times10^{-3}.\) The global metric sweep retrains every interaction-order condition and recomputes tortuosity from the same states using Bures, Frobenius, and

\begin{equation*}
\sqrt{1-\mathrm{CKA}}
\end{equation*}

distances; it does not reuse Bures-selected checkpoints.

{\color{black}The experiments are designed to be fully rerunnable and auditable, with all sweep scripts, per-seed records, figure-generation code, and numerical tests provided. Tests cover density-state validity, Bures identity and symmetry, common-rotation invariance, and QFI positive semidefiniteness.}

\section{Proof of Theorem 1}\label{app:proof1}

\paragraph{Proof idea.} The infinitesimal Bures distance is a quadratic form in the state perturbation. Expressing that perturbation in the chosen factor coordinates produces one quarter of the QFI quadratic form, and the most sensitive direction follows from the Rayleigh quotient.

Fix \(\phi\) and write

\begin{equation*}
d\rho
=
\sum_a
(\partial_a\rho)d\phi_a.
\end{equation*}

In an eigenbasis

\begin{equation*}
\rho
=
\sum_i
\lambda_i
|i\rangle\langle i|,
\end{equation*}

the infinitesimal Bures metric is \citep{braunstein1994,petz2011}

\begin{equation*}
d_B^2(\rho,\rho+d\rho)
=
\frac12
\sum_{i,j}
\frac{
|\langle i|d\rho|j\rangle|^2
}{
\lambda_i+\lambda_j
}
+
o(\|d\rho\|^2).
\tag{B.1}
\end{equation*}

Substituting the coordinate expansion gives

\begin{equation*}
d_B^2
=
\frac12
\sum_{i,j}
\frac{
\sum_{a,b}
\mathrm{Re}
[
\langle i|\partial_a\rho|j\rangle
\langle j|\partial_b\rho|i\rangle
]
d\phi_a d\phi_b
}{
\lambda_i+\lambda_j
}
+
o(\|d\phi\|^2),
\tag{B.2}
\end{equation*}

and therefore

\begin{equation*}
d_B^2
=
\frac14
\sum_{a,b}
F_{ab}
d\phi_a d\phi_b
+
o(\|d\phi\|^2),
\tag{B.3}
\end{equation*}

which proves Eq. (4).

For \(d\phi=\epsilon v, \qquad \|v\|_2=1,\) the leading squared displacement is

\begin{equation*}
\frac{\epsilon^2}{4}
v^\top Fv.
\end{equation*}

By the Rayleigh--Ritz theorem, this is maximized by a leading eigenvector of \(F\). Thus the leading QFI eigenvector gives the most sensitive local input direction.

For real covariance states, all displayed matrix elements are real and the real-part operator is redundant. At a smooth constant-rank state, the same argument holds on its support provided tangent directions remain within the constant-rank stratum. If rank changes at first order, the Bures distance need not admit this quadratic expansion; Theorem 1 deliberately excludes that case.

\section{Proof of Theorem 2}\label{app:proof2}

\paragraph{Proof idea.} The representation-change constraint is an ellipsoid in parameter coordinates. Rescaling by \(G_\lambda^{1/2}\) turns it into a Euclidean ball, where Cauchy--Schwarz identifies the direction of greatest first-order loss decrease.

Let \(g=\nabla_\alpha\mathcal L, \qquad G=G_\lambda\succ0,\) and define \(u=G^{1/2}\delta.\) The constraint \(\delta^\top G\delta\leq\epsilon^2\) becomes \(\|u\|_2\leq\epsilon,\) while the first-order loss change is

\begin{equation*}
g^\top\delta
=
(G^{-1/2}g)^\top u.
\tag{C.1}
\end{equation*}

By Cauchy--Schwarz,

\begin{equation*}
g^\top\delta
\geq
-
\|G^{-1/2}g\|_2
\|u\|_2
\geq
-
\epsilon
\sqrt{
g^\top G^{-1}g
}.
\tag{C.2}
\end{equation*}

Equality is achieved for

\begin{equation*}
u
=
-\epsilon
\frac{
G^{-1/2}g
}{
\sqrt{
g^\top G^{-1}g
}
}.
\end{equation*}

Transforming back yields

\begin{equation*}
\delta^\star
=
-\epsilon
\frac{
G^{-1}g
}{
\sqrt{
g^\top G^{-1}g
}
},
\tag{C.3}
\end{equation*}

which proves Theorem 2.

Multiplying this direction by a learning rate gives the implemented damped RepNG update. When \(\lambda=0\), the metric budget is proportional to predicted local Bures movement. With damping, the update additionally penalizes parameter motion in weakly measured directions.

The theorem optimizes a first-order loss model within a local quadratic budget. It makes no claim about global convergence, total path length, or finite-path optimality.

\subsection{Relation to Standard Natural Gradient}

Standard natural gradient uses a metric induced by changes in the model's predictive distribution. RepNG instead uses a metric induced by changes in an internal representation. Two parameter directions with similar Euclidean norm, or similar effects on the output distribution, can therefore have different RepNG costs if they reorganize the chosen representation differently.

\section{Operational Meaning of Bures Distance}\label{app:operational}

The following result connects geometric proximity to measurable representation statistics. For a distribution \(P\), let

\begin{equation*}
C_P
=
\mathbb E_P
[
(h-\mu_P)(h-\mu_P)^\top
],
\qquad
\rho_P
=
\frac{C_P}{\operatorname{Tr}C_P},
\end{equation*}

and define the normalized quadratic observable

\begin{equation*}
R_P(A)
=
\frac{
\mathbb E_P
[
(h-\mu_P)^\top
A
(h-\mu_P)
]
}{
\mathbb E_P
\|h-\mu_P\|^2
}
=
\operatorname{Tr}(A\rho_P).
\tag{D.1}
\end{equation*}

Define \(R_S(A)\) analogously for an empirical sample.

\begin{proposition}[Uniform control of normalized quadratic observables]\label{thm:observable} For symmetric \(A\) with \(\|A\|_{\mathrm{op}}\leq M\) and nonzero empirical and population energy,

\begin{equation*}
|R_S(A)-R_P(A)|
\leq
2M
d_B(\rho_S,\rho_P)
\sqrt{
1-\frac14
d_B^2(\rho_S,\rho_P)
}
\leq
2M
d_B(\rho_S,\rho_P).
\tag{D.2}
\end{equation*}

\end{proposition}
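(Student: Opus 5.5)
The plan is to reduce the statement to a bound on the trace norm of \(\rho_S-\rho_P\), and then to control that trace norm by the root fidelity \(F_{1/2}\) in Eq. (2) via the Fuchs--van de Graaf inequality. The nonzero-energy hypothesis makes \(\rho_S\) and \(\rho_P\) well-defined unit-trace PSD matrices, and Eq. (D.1) gives \(R_S(A)-R_P(A)=\operatorname{Tr}\bigl(A(\rho_S-\rho_P)\bigr)\). By the duality between operator and trace norms (Hölder for Schatten norms),
\[
|R_S(A)-R_P(A)|\le \|A\|_{\mathrm{op}}\,\|\rho_S-\rho_P\|_1\le 2M\,T(\rho_S,\rho_P),
\]
where \(T=\tfrac12\|\rho_S-\rho_P\|_1\) is the trace distance. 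This step can also be seen directly by diagonalizing \(\rho_S-\rho_P\) and bounding each eigen-component.

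Next, write \(F=F_{1/2}(\rho_S,\rho_P)\) and \(d=d_B(\rho_S,\rho_P)\). Eq. (2) gives \(F=1-d^2/2\), so a short algebraic step yields
\[
1-F^2=(1-F)(1+F)=\tfrac{d^2}{2}\Bigl(2-\tfrac{d^2}{2}\Bigr)=d^2\Bigl(1-\tfrac{d^2}{4}\Bigr).
\]
Hence the first inequality in Eq. (D.2) is exactly \(T\le\sqrt{1-F^2}\), the upper Fuchs--van de Graaf bound. The second inequality follows immediately, since \(0\le d^2\le 2\) makes \(\sqrt{1-d^2/4}\le 1\).

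The main obstacle is justifying \(T\le\sqrt{1-F^2}\) in the real PSD setting. I would argue it through Uhlmann's theorem. Choose purifications \(|\psi\rangle,|\varphi\rangle\) of \(\rho_S,\rho_P\) on a doubled space with \(|\langle\psi|\varphi\rangle|=F\). For instance, take \(|\psi\rangle=\mathrm{vec}(\sqrt{\rho_S})\) and \(|\varphi\rangle=\mathrm{vec}(\sqrt{\rho_P}\,U)\), with \(U\) the unitary from the polar decomposition of \(\sqrt{\rho_P}\sqrt{\rho_S}\). Two facts then finish the argument. First, for pure states a \(2\times2\) eigenvalue computation gives \(\tfrac12\bigl\||\psi\rangle\langle\psi|-|\varphi\rangle\langle\varphi|\bigr\|_1=\sqrt{1-|\langle\psi|\varphi\rangle|^2}\). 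Second, the trace norm is nonincreasing under partial trace, because \(\operatorname{Tr}(B\,\operatorname{Tr}_2 X)=\operatorname{Tr}((B\otimes I)X)\) and \(\|B\otimes I\|_{\mathrm{op}}=\|B\|_{\mathrm{op}}\). Combining these gives \(T(\rho_S,\rho_P)\le\sqrt{1-F^2}\). Real symmetric matrices embed in the complex setting without changing traces or norms, so the standard argument applies verbatim; alternatively one may simply cite the inequality as standard.
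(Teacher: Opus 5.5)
Your proposal is correct and follows the paper's proof. Both use operator/trace-norm duality to get $|R_S(A)-R_P(A)|\le M\|\rho_S-\rho_P\|_1$, then the Fuchs--van de Graaf bound, then the identity $1-F^2=d^2(1-d^2/4)$ from $F=1-d^2/2$; the only difference is that you also sketch a proof of Fuchs--van de Graaf (via Uhlmann purifications and monotonicity of the trace norm under partial trace), where the paper simply cites it.
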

Thus Bures distance has an operational interpretation beyond visualization: if two representation states are close, their normalized energy must also be close along every bounded quadratic feature direction. The guarantee is deliberately limited to the covariance structure represented by \(\rho\); it does not directly control labels, higher moments, or downstream accuracy.

\paragraph{Proof.}

\emph{Proof idea.} Trace/operator-norm duality bounds every normalized quadratic observable by trace distance. The Fuchs--van de Graaf inequality then converts trace distance to an explicit function of Bures distance.

By trace/operator-norm duality,

\begin{equation*}
\begin{aligned}
|R_S(A)-R_P(A)|
&=
|
\operatorname{Tr}
[
A(\rho_S-\rho_P)
]
|\\
&\leq
\|A\|_{\mathrm{op}}
\|\rho_S-\rho_P\|_1\\
&\leq
M
\|\rho_S-\rho_P\|_1.
\end{aligned}
\tag{D.3}
\end{equation*}

Let

\begin{equation*}
D(\rho,\sigma)
=
\frac12
\|\rho-\sigma\|_1,
\qquad
f=F_{1/2}(\rho,\sigma),
\qquad
d=d_B(\rho,\sigma).
\end{equation*}

The Fuchs--van de Graaf inequality \citep{fuchs1999} gives

\begin{equation*}
D(\rho,\sigma)
\leq
\sqrt{1-f^2}.
\tag{D.4}
\end{equation*}

Since \(d^2=2-2f,\) we have

\begin{equation*}
f=1-\frac12d^2,
\end{equation*}

and therefore

\begin{equation*}
\sqrt{1-f^2}
=
d
\sqrt{
1-\frac14d^2
}
\leq d.
\tag{D.5}
\end{equation*}

Combining Eqs. (D.3)--(D.5) proves Eq. (D.2).

\section{Additional Robustness Results}\label{app:robustness}

\begin{figure}[H]
\centering\includegraphics[width=.96\linewidth]{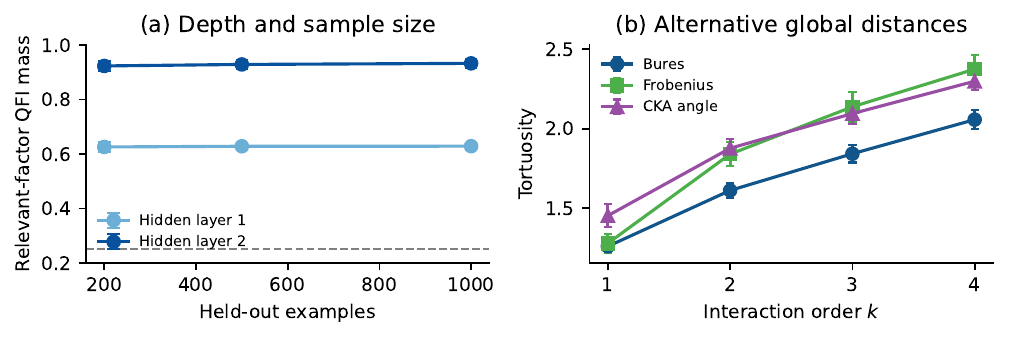}
\caption{{\color{black}\textbf{Robustness.} (a) QFI relevance is stable with sample size and increases across hidden layers. (b) The path-complexity ordering persists under three global distances. Error bars are 95\% confidence intervals over ten seeds.}}
\label{fig:robustness}
\end{figure}

\subsection{QFI Estimation and Depth}

Table~\ref{tab:qfi-robustness} gives the mean relevant-factor mass for every tested sample size and finite-difference step. Values are identical to four decimals across step sizes. At the default \(\epsilon=2\times10^{-3}\), mass rises from approximately \(0.63\) in the first hidden layer to \(0.93\) in the second, indicating that concentration strengthens with depth rather than simply reflecting input geometry.

\begin{table}[h]
\centering\small
\caption{Mean relevant-factor QFI mass over ten seeds. The first row reports hidden layer 1 at the default step; the remaining rows report hidden layer 2.}
\label{tab:qfi-robustness}
\setlength{\tabcolsep}{7pt}
\begin{tabular}{ccccc}
\toprule
Layer & $\epsilon$ & $n=200$ & $n=500$ & $n=1000$\\
\midrule
1 & $2\times10^{-3}$ & 0.6257 & 0.6283 & 0.6285\\
2 & $1\times10^{-3}$ & 0.9229 & 0.9284 & 0.9323\\
2 & $2\times10^{-3}$ & 0.9229 & 0.9284 & 0.9323\\
2 & $5\times10^{-3}$ & 0.9229 & 0.9284 & 0.9323\\
\bottomrule
\end{tabular}
\end{table}

{\color{black}
\subsection{QFI Support-Threshold Sensitivity}\label{app:qfi-threshold}

We recompute the synthetic factor-relevance result using support thresholds \(10^{-10},10^{-11},10^{-12},10^{-13}\) in the denominators \(\lambda_i+\lambda_j\). Mean relevant-factor mass is \(0.931436\) at every threshold and is identical within every seed. The smallest state eigenvalue across the ten runs ranges from \(6.68\times10^{-7}\) to \(1.44\times10^{-6}\), well above the tested thresholds. Thus, the reported QFI structure is not driven by support-threshold crossings.

\subsection{Feature-Coordinate Transformations}\label{app:coordinates}

We audit ten 100-update binary-parity trajectories under transformations of the final 32-dimensional hidden interface. At every checkpoint we apply the same map \(H_t'=H_tA\) and compensate the output weights as \(W_t'=W_tA^{-\top}\), preserving logits to below \(6\times10^{-15}\). Permutations and common orthogonal rotations leave every trajectory statistic unchanged to numerical precision (Table~\ref{tab:coordinate-control}), as expected from unitary invariance of Bures distance. A diagonal rescaling with per-unit scales spanning \([0.5,2]\) also preserves the network function but changes tortuosity by \(2.4\%\) on average; seed-level transformed-to-original ratios range from \(0.985\) to \(1.071\). Thus, the geometry is exactly invariant to permutations and orthogonal changes of feature basis and remains quantitatively stable under the tested compensated rescalings, while, as expected, it is not formally invariant to arbitrary invertible reparameterizations.

\begin{table}[h]
\centering\small
\caption{Function-preserving feature-coordinate audit over ten seeds. The same transformation is applied at every checkpoint; \(L_B\), \(D_B\), and \(\tau\) are seed means.}
\label{tab:coordinate-control}
\setlength{\tabcolsep}{7pt}
\begin{tabular}{lcccc}
\toprule
Transformation & Max logit error & \(L_B\) & \(D_B\) & \(\tau\)\\
\midrule
Identity & 0 & 2.729 & 1.008 & 2.708\\
Hidden-unit permutation & \(<2\times10^{-15}\) & 2.729 & 1.008 & 2.708\\
Common orthogonal rotation & \(<6\times10^{-15}\) & 2.729 & 1.008 & 2.708\\
Compensated per-unit rescaling & \(<2\times10^{-15}\) & 2.595 & 0.937 & 2.772\\
\bottomrule
\end{tabular}
\end{table}

}
\subsection{Checkpoint Resolution}

Discrete path length depends on temporal sampling. We therefore recompute the interaction-order trajectories while retaining every 1st, 2nd, 5th, 10th, or 20th checkpoint.

Coarser sampling decreases absolute path length and tortuosity, but Table~\ref{tab:checkpoint-resolution} shows that the complete ordering \(\tau_{k=1} < \tau_{k=2} < \tau_{k=3} < \tau_{k=4}\) is preserved at every tested resolution.

\begin{table}[h]
\centering\small
\caption{Mean Bures tortuosity over ten seeds after retaining every $s$th checkpoint.}
\label{tab:checkpoint-resolution}
\setlength{\tabcolsep}{8pt}
\begin{tabular}{ccccc}
\toprule
Spacing $s$ & $k=1$ & $k=2$ & $k=3$ & $k=4$\\
\midrule
1  & 1.261 & 1.607 & 1.838 & 2.049\\
2  & 1.255 & 1.601 & 1.832 & 2.043\\
5  & 1.222 & 1.563 & 1.792 & 2.000\\
10 & 1.161 & 1.477 & 1.697 & 1.900\\
20 & 1.094 & 1.351 & 1.537 & 1.713\\
\bottomrule
\end{tabular}
\end{table}

{\color{black}
\subsection{Matched Training Duration}\label{app:duration}

The accuracy-stopped parity runs require increasingly many updates as interaction order grows (Table~\ref{tab:duration-decomposition}). To remove this duration difference, we train every condition for the same number of updates. Table~\ref{tab:matched-budget} shows that the positive interaction-order trend is already present after 20 updates and remains at every tested budget through 300 updates. Across the six budgets, the two-sided test of the ten within-seed slopes gives \(p\leq6.9\times10^{-7}\), and every paired \(k=4\) versus \(k=1\) comparison gives \(p\leq1.1\times10^{-6}\). The two controls are complementary: accuracy stopping matches performance but not duration, whereas fixed-budget training matches duration but not performance.

\begin{table}[h]
\centering\small
\caption{Mean tortuosity at matched training budgets over ten paired seeds. Every interaction order receives exactly the listed number of updates.}
\label{tab:matched-budget}
\setlength{\tabcolsep}{8pt}
\begin{tabular}{ccccc}
\toprule
Updates & \(k=1\) & \(k=2\) & \(k=3\) & \(k=4\)\\
\midrule
20  & 1.178 & 1.711 & 1.804 & 1.970\\
40  & 1.209 & 1.940 & 1.956 & 2.133\\
60  & 1.225 & 2.294 & 2.284 & 2.597\\
100 & 1.245 & 2.575 & 2.708 & 3.428\\
200 & 1.263 & 2.791 & 3.069 & 4.590\\
300 & 1.270 & 2.878 & 3.152 & 5.142\\
\bottomrule
\end{tabular}
\end{table}

\begin{table}[h]
\centering\small
\caption{Decomposition of the accuracy-stopped and 300-update parity trajectories. \(L_B\) is path length and \(D_B\) is endpoint displacement.}
\label{tab:duration-decomposition}
\setlength{\tabcolsep}{4pt}
\begin{tabular}{c r ccc ccc}
\toprule
& & \multicolumn{3}{c}{Stopped at 75\% validation accuracy} & \multicolumn{3}{c}{Fixed 300 updates}\\
\cmidrule(lr){3-5}\cmidrule(lr){6-8}
\(k\) & Stop epoch & \(L_B\) & \(D_B\) & \(\tau\) & \(L_B\) & \(D_B\) & \(\tau\)\\
\midrule
1 &   1.9 & 0.294 & 0.290 & 1.014 & 1.600 & 1.259 & 1.270\\
2 &  22.7 & 1.331 & 0.775 & 1.723 & 3.045 & 1.061 & 2.878\\
3 &  31.9 & 1.501 & 0.803 & 1.872 & 3.240 & 1.028 & 3.152\\
4 & 175.3 & 3.693 & 0.889 & 4.177 & 4.587 & 0.893 & 5.142\\
\bottomrule
\end{tabular}
\end{table}

}
\subsection{Alternative Global Distances}

\vTwoChange{We recompute every interaction-order trajectory using Frobenius distance between normalized covariance states and the CKA-derived distance \(\sqrt{1-\mathrm{CKA}}\).}

All three distances preserve the mean ordering across interaction order. At the seed level, the complete ordering appears in \(10/10\) Bures runs, \(9/10\) CKA-angle runs, and \(8/10\) Frobenius runs.

We retain Bures because its infinitesimal geometry is QFI, not because alternative distances remove the phenomenon.

\section{Optimizer Intervention: Additional Controls}\label{app:optimizer}

\subsection{Paired Optimizer Comparison}

\vTwoChange{The reduction in tortuosity is driven by shorter path length rather than increased endpoint displacement.}

\begin{table}[h]
\centering\small
\vTwoChange{\caption{Optimizer means over 30 paired seeds at unit checkpoint spacing. Accuracy is test accuracy; $L_B$ is path length and $D_B$ is endpoint displacement.}}
\label{tab:optimizer-decomposition}
\setlength{\tabcolsep}{6pt}
\begin{tabular}{lccccc}
\toprule
Optimizer & Accuracy (\%) & Steps & $L_B$ & $D_B$ & $\tau$\\
\midrule
RepNG     & 71.43 & 65.77 & 1.207 & 0.636 & 1.895\\
Output NG & 71.12 & 31.70 & 1.426 & 0.717 & 1.987\\
SGD       & 71.27 & 62.30 & 1.549 & 0.748 & 2.084\\
Adam      & 71.02 & 47.33 & 1.945 & 0.736 & 2.663\\
\bottomrule
\end{tabular}
\end{table}

\vTwoChange{RepNG is more direct than Adam by \(0.768\) tortuosity units (\(p<10^{-7}\)) and more direct than SGD by \(0.189\) (\(p=0.026\)). The RepNG--output-NG difference is not statistically resolved (\(p=0.320\)). RepNG produces more direct representation trajectories than Adam and Euclidean SGD at matched predictive performance.}

\subsection{Parameter-Subspace Dimension}

The paired RepNG--Adam comparison is repeated in 36- and 48-dimensional random parameter subspaces over 20 seeds.

{\color{black}At dimension 36, \(\tau_{\mathrm{RepNG}}=1.607, \qquad \tau_{\mathrm{Adam}}=1.986,\) with \(p=8.6\times10^{-5}.\) At dimension 48, \(\tau_{\mathrm{RepNG}}=1.537, \qquad \tau_{\mathrm{Adam}}=1.856,\) with \(p=3.4\times10^{-4}.\) Test accuracy remains matched within 0.5 percentage points. The Adam comparison therefore persists as the shared parameter subspace grows.}

\subsection{Optimization Speed Versus Representational Directness}

RepNG requires \(65.8\) steps on average to reach the validation threshold, compared with \(47.3\) for Adam. Representation-space directness is therefore distinct from conventional optimization speed: a method can require more parameter updates while still following a more direct internal representation trajectory.

\vTwoChange{\subsection{Update-Norm- and Budget-Matched Direction Control}\label{app:norm-matched}

We compare RepNG with output natural gradient using identical maximum update norms (\(0.2\)) and a fixed 120-update training budget in the same 24-dimensional parameter subspace. Across 20 confirmatory seeds, RepNG has lower mean tortuosity than output natural gradient (\(2.506\) versus \(3.520\); paired \(p=2.0\times10^{-10}\)). The difference is driven by lower cumulative path length (\(1.679\) versus \(2.639\)); endpoint displacement is comparatively only slightly different for RepNG (\(0.674\) versus \(0.753\)). This fixed-budget, norm-matched control isolates a setting in which the representation-derived direction produces more direct trajectories than output natural gradient.

\begin{table}[h]
\centering\small
\caption{Twenty-seed fixed-budget control with identical maximum update norms.}
\label{tab:norm-matched}
\begin{tabular}{lccc}
\toprule
Method & \(L_B\) & \(D_B\) & \(\tau\)\\
\midrule
RepNG & 1.679 & 0.674 & 2.506\\
Output NG & 2.639 & 0.753 & 3.520\\
\bottomrule
\end{tabular}
\end{table}}

\section{Tortuosity and Subsequent Representational Stability}\label{app:downstream}

We test whether trajectory geometry predicts subsequent representational stability beyond interaction order. Across 30 fresh seeds and four interaction-order conditions (120 runs), tortuosity predicts both continued-training drift (standardized \(\beta=0.87\), clustered \(p=9.2\times10^{-4}\), permutation \(p=0.0015\)) and parameter-perturbation drift (\(\beta=0.51\), clustered \(p=2.4\times10^{-4}\), permutation \(p=0.0010\)). Because interaction order is included as a fixed effect, these associations reflect variation in formation history within task complexity rather than the mean difference across \(k\).

Endpoint displacement predicts neither continued-training drift (\(p=0.885\)) nor parameter-perturbation drift (\(p=0.325\)). Thus, the path by which a representation forms contains information about its subsequent stability that is not captured by interaction order or initial-to-final displacement alone.

\begin{table}[h]
\centering
\caption{Fresh 30-seed representational-stability sweep (120 task--seed runs). Clustered \(p\)-values use interaction-order fixed effects; permutation tests shuffle complete task profiles across seeds.}
\label{tab:downstream}
\small
\begin{tabular}{lccc}
\toprule
Outcome & Std. \(\beta_\tau\) & Cluster \(p\) & Perm. \(p\) \\
\midrule
Continued-training drift & 0.87 & \(9.2\times10^{-4}\) & 0.0015 \\
Parameter-perturbation drift & 0.51 & \(2.4\times10^{-4}\) & 0.0010 \\
\bottomrule
\end{tabular}
\end{table}
\end{document}